\newif\ifcomments
\newcommand{\commentsm}[1]{\textcolor{blue}{(SM: #1)}}
\newcommand{\commentms}[1]{\textcolor{magenta}{(MS: #1)}}
\newcommand{\commenttk}[1]{\textcolor{cyan}{(TK: #1)}}
\newcommand{\old}[1]{\textcolor{red}{\sout{#1}}}
 \newcommand{\commentsm}[1]{}
 \newcommand{\commentms}[1]{}
 \newcommand{\commenttk}[1]{}
 \newcommand{\old}[1]{}
\newcommand{\nbcite}{\def\citeauthoryear##1##2{\def\@thisauthor{##1}%
\ifx \@lastauthor \@thisauthor \relax \else##1 \fi ##2}\@nbcite}
\def\citeS{\@ifnextchar[{\@jbciteS}{\@jbciteS[]}}
\def\@jbciteS[#1]#2{%
\ifthenelse{\equal{#1}{}}{%
\citeauthor{#2}'s (\citeyear{#2})}{%
\citeauthor{#2}'s #1 (\citeyear{#2})}}
\newcommand{\eat}[1]{}
\def\expl{\mathit{Expl}}
\def\bexpl{B_i\expl}
\begin{document}
\title{Towards the Role of Theory of Mind in Explanation}

\titlerunning{Towards the Role of Theory of Mind in Explanation}

\author{Maayan Shvo \and
Toryn Q. Klassen \and 
Sheila A. McIlraith}
%
%%\inst{1,2}
\authorrunning{M. Shvo et al.}
% First names are abbreviated in the running head.
% If there are more than two authors, 'et al.' is used.
%
\institute{Department of Computer Science, University of Toronto, Toronto, Canada\\Vector Institute, Toronto, Canada\\
\email{\{maayanshvo,toryn,sheila\}@cs.toronto.edu}}

% \and
% Vector Institute, Toronto, Canada\\
% \email{\{maayanshvo,toryn,sheila\}@cs.toronto.edu}

\maketitle              % typeset the header of the contribution
\begin{abstract}
Theory of Mind is commonly defined as the ability to attribute mental states (e.g., beliefs, goals) to oneself, and to others. A large body of previous work---from the social sciences to artificial intelligence---has observed that Theory of Mind capabilities are central to providing an explanation to another agent or when explaining that agent’s behaviour. In this paper, we build and expand upon previous work by providing an account of explanation in terms of the beliefs of agents and the mechanism by which agents revise their beliefs given possible explanations. We further identify a set of desiderata for explanations that utilize Theory of Mind. These desiderata inform our belief-based account of explanation.

\end{abstract}

\section{Introduction}\label{sec:intro}
%
%Theory of mind was originally proposed by \citeauthor{premack1978does} in \citeyear{premack1978does},  is commonly defined as the ability to attribute mental states (e.g., beliefs, goals) to oneself, and to others.
%
Following \citeauthor{premack1978does} \cite{premack1978does}, an agent exercises %has?/applies 
\textit{Theory of Mind} if it imputes mental states to itself and others. %States are not directly observable.
Here we explore the role of Theory of Mind in explanation.  %CONSIDER ADD BACK, including XAI. 
Consider the following narrative by way of illustration.

\begin{quote}
{\it Mary, Bob and Tom are housemates sharing a house. While Tom was away on a business trip, Mary and Bob noticed a hole in the roof of their house and called a handyman to fix it. Before the handyman could come, however, it rained during the night and the floor got wet. Bob, who sleeps in a windowless room, did not notice the rain. Tom, who just got back from his trip that day, noticed the rain but did not know about the hole in the roof. Mary saw Tom return to the house at night and so knew that Tom knew that it had rained. In the morning, when trying to explain the wet floor to Bob, Mary tells him that it had rained during the night and when explaining to Tom she tells him that she and Bob had discovered a hole in the roof (adding that the handyman will arrive the next day). 
}
\end{quote} 

% Bob, Mary, and Charlie are housemates sharing a house. While Charlie was away on a business trip, Bob and Mary noticed a hole in the roof of their house and called a handyman to fix it. Before the handyman could come, however, it rained during the night and the floor got wet. Mary, who sleeps in a windowless room, did not notice the rain. Charlie, who just got back from his trip that day, noticed the rain but did not know about the hole in the roof. Bob saw Charlie return to the house at night and so knew that Charlie knew that it had rained. In the morning, when trying to explain the wet floor to Mary, Bob tells her that it rained during the night and when explaining to Charlie he tells him that he and Mary had discovered a hole in the roof (adding that the handyman will arrive the next day). 

Clearly, Mary tailored her explanations to each of her housemates, %such that 
believing the information she  was providing to them was sufficient to explain the wet floor in their respective mental states. Her ability to do this stems from her Theory of Mind - her ability to attribute mental states (e.g., beliefs) to herself and to others. In humans, the use of Theory of Mind in explanation has been demonstrated empirically by \citeauthor{slugoski1993attribution} \cite{slugoski1993attribution} via a set of experiments where human participants gave different explanations to different explainees (i.e., the recipient of an explanation), based on the beliefs of the explainers about the beliefs of the explainees\footnote{We henceforth use \emph{explainer} and \emph{explainee} in reference to the provider and recipient of the explanation, and \emph{explanandum} in reference to the thing to be explained.}.
%
% \commenttk{The last sentence seems a bit awkward. The "their" in ``their beliefs'' could be ambiguous, and we introduce the term explainee but then immediately switch to ``recipients of the explanation''.}
%
Of course Mary's explanations are only as good as her ability to model the mental states of her housemates and how they will alter their mental states in light of her explanation. Mary's beliefs about Bob and Tom's beliefs, or her belief about how each of them revises their beliefs,
may well be wrong, in which case her explanations to them may fail to explain why the floor is wet. %Finally, as will be discussed in the paper, while the explanations are correct from Mary's perspective, her beliefs about Bob and Tom's beliefs may well be wrong, in which case her explanations to them may fail to explain why the floor is wet.

Explanation has been studied in a diversity of disciplines.  \citeauthor{DBLP:journals/ai/Miller19} \cite{DBLP:journals/ai/Miller19} provides an extensive survey of explanation in artificial intelligence that includes a selection of historical works in philosophy (e.g., \citeauthor{hempel1948studies} \cite{hempel1948studies}; \citeauthor{peirce1878deduction} \cite{peirce1878deduction}; \citeauthor{harman1965inference} \cite{harman1965inference}), arguing for the important role of philosophy and the social sciences in future work on explanation. Within AI, early work on explanation included a variety of logic-based and probabilistic approaches to abductive inference or so-called {\it inference to the best explanation} including the early works of  \citeauthor{pople1973mechanization} \cite{pople1973mechanization}, \citeauthor{charniak-book85}
\cite{charniak-book85},
\citeauthor{poole1989explanation}
\cite{poole1989explanation}, and
\citeauthor{levesque1989knowledge}
\cite{levesque1989knowledge}. 
%These works saw application in areas such as diagnosis, plan recognition and language interpretation. 
In the mid 1980s, explanation was popularized in the context of expert systems where explanations were often generated by backward chaining over a set of symbolic inference steps % to reconstruct the reasoning that led to a conclusion 
(e.g., \cite{hayes1983building,shortliffe1985rule}). 
%\citeauthor{hayes1983building} \cite{hayes1983building}; \citeauthor{shortliffe1985rule} \cite{shortliffe1985rule}) %\citeauthor{de1987diagnosing} %REMOVE %\cite{de1987diagnosing}; %%\cite{konolige1992abduction}; %\citeauthor{mcllraith1999explanatory} %\cite{mcllraith1999explanatory}), 
Following that time, explanation was a common element in a diversity of applications of symbolic AI reasoning (e.g., \cite{mcg-pin-jws04,bor-cal-rod-dl08,sohrabi2011preferred}). The recent resurgence of interest in explanation is largely in the guise of so-called \emph{Explainable AI} (XAI), which is motivated by the need to provide human-interpretable explanations for decision making in black-box classification and decision-making systems based on machine and deep learning (e.g., \citeauthor{samek2017explainable} \cite{samek2017explainable}; \citeauthor{GunningSCMSY19} \cite{GunningSCMSY19}).

Numerous researchers have acknowledged the importance of Theory of Mind in explanation. In the 80s and 90s, formal accounts of explanation such as those proposed by \citeauthor{gardenfors1988knowledge} \cite{gardenfors1988knowledge} and \citeauthor{chajewska2013defining} \cite{chajewska2013defining} observed that an explanation for one agent may not serve as an explanation for another, and the explainer must therefore tailor an explanation to an explainee given the latter's beliefs.
Within the space of user modelling and dialogue, and also set in the 80s and 90s, \citeauthor{weiner1980blah}'s \cite{weiner1980blah} BLAH system and \citeauthor{cawsey1991generating}'s \cite{cawsey1991generating} EDGE system both tailor explanations to the presumed user model.
More recently, researchers have leveraged belief-desire-intention (BDI) architectures as a natural framework for explanations reflecting Theory of Mind. Such software architectures can enable an explainer to explicitly represent its own beliefs, desires, and intentions, as well as those of an explainee, and to relate explanations to its own beliefs and goals or those of the explainee (e.g., \citeauthor{harbers2012modeling} \cite{harbers2012modeling}; \citeauthor{kaptein2017personalised} \cite{kaptein2017personalised}). Most recently, \citeauthor{westberg2019historical} \cite{westberg2019historical} has posited that incorporating various points of view on Theory of Mind from the cognitive sciences will facilitate the creation of agents better suited to communicate and explain themselves to the humans with whom they are interacting. Additionally, \citeauthor{DBLP:journals/ai/Miller19} \cite{DBLP:journals/ai/Miller19} has surveyed this body of work and has also emphasized the importance of the explainer's ability to tailor an explanation to the explainee, using its understanding of the latter's mind.
Finally, within the subfield of XAI known as XAI Planning (XAIP) \citeauthor{chakraborti2017plan} \cite{chakraborti2017plan} have implemented XAIP in human-agent teaming settings, such as search \& rescue, where a robot equipped with Theory of Mind capabilities could explain its actions to its human teammate by taking into account the latter's mental state.

In this paper we build on the shoulders of previous scholarly work to explore the role of Theory of Mind in explanation with a view to addressing the diverse needs of explanation in AI, and XAI in particular. To this end, in Section~\ref{sec:desiderata} we identify a set of desiderata for explanations that utilize Theory of Mind. These desiderata inform a set of design choices for a belief-based account of explanation which we present in Section~\ref{sec:expl}. Of course not all explanations are created equal, and in Section~\ref{sec:best_expl} we discuss the criteria by which the quality of an explanation can be evaluated. In Section~\ref{sec:discrepancy} we demonstrate how, in the absence of an explicit prompt to be explained, %{\color{red}{explanandum}}, 
our account allows the explainer to simulate the explainee's mental state and identify discrepancies that warrant explanation. Explanations are limited by the coverage and accuracy of the explainer's beliefs as well as its reasoning capacity. In Section~\ref{sec:adeq}, we show how our account allows for the modelling of the ignorance and misconceptions of an explainer pertaining to the mental state of an explainee and how these may affect the quality of explanation. We conclude 
%in Section~\ref{sec:related_work} we survey related work; and in Section~\ref{sec:concluding} we conclude 
with a discussion of related work and possible computational realizations of our general account.

\section{Desiderata for Theory of Mind in Explanation}\label{sec:desiderata}

%\commentsm{I'm not sure i like the "titles" below.  The alternative is just bullets.  What do others think?}

%*** Much of the previous work has assumed a symbolic KB, syntactic manipulations, objective truth rather than beliefs. ****

We begin our investigation by reflecting on the key components that support an agent in imputing mental states to itself and others, reasoning about how the provision of new information is assimilated into an agent's existing set of beliefs, and the circumstances underwhich such information constitutes an explanation for the explainee. 
To this end, we identify a set of desiderata that inform our account of explanation in the sections to follow.

%Recall that an agent exercises Theory of Mind if it imputes mental state to itself and others.
%
%Here we build upon and expand the existing scholarly discussion and posit that accounts of explanation that purport to employ Theory of Mind must satisfy the following desiderata: 
\begin{description}
\item [multi-agent:] the account must be conceived in a multi-agent setting to support representation of the beliefs of one or more explainer and explainee.
%which facilitates explanation between arbitrarily many explainers and explainees
\item[agent-type agnostic:] the account must support a myriad of different agent types whose beliefs may be internally represented, inspectable, and revisable in diverse ways. For example,   
%(e.g., 
the agent's beliefs may be stored in a human brain or in, for instance, the parameters of a neural network or formulae in a knowledge base.
%).
\item[belief based:] the account must model the possibly false or simply incomplete beliefs of explainers and explainees. %\textcolor{red}{rather than, or in addition to, some normative theory of the world.}
%
% and enable an explainer to consider the beliefs of the explainee
% As well, the beliefs of the explainer must be taken into account as those constrain the generation of explanations and determine their quality relative to the beliefs of the explainee. 
%
\item[reason about the beliefs of others:] the account must allow an explainer to reason about the explainee's beliefs when providing the latter with an explanation since, due to their possibly differing beliefs, an explanation for the explainer may not be an explanation for the explainee.
%
% Importantly, an explanation for the explainer may not be an explanation for the explainee and the explainer must therefore be able to put itself in the shoes of the explainee to provide an appropriate explanation for them;
%
%finally, 
\item[support belief revision:] the account must enable the explainer to consider how an explanation is assimilated by the explainee, and in particular how the latter revises their beliefs given potential explanations which may be inconsistent with their current beliefs.
%(possibly false) beliefs). 
\item[explanations can refer to beliefs:] the account must allow for explanations that themselves refer to beliefs. To illustrate why this is useful, consider that the explainer might explain their having not told the explainee the location of a party by saying that the explainer believed that the explainee knew the location. %\commenttk{I'm not sure if this suggests that we're modelling more stuff about action and time than we are.}
\end{description}

%\commentsm{Sheila rephrase this.}
While previous work has addressed some of these desiderata, in this paper we propose a belief-based account of explanation in terms of epistemic states of agents that satisfies all of the aforementioned desiderata by employing a number of crucial building blocks relating to these desiderata.

\section{A Belief-Based Account of Explanation}\label{sec:expl}
%\section{Preliminaries}

%\commenttk{I think most sources would recommend capitalizing -based in a title.
% see e.g. https://www.businesswritingblog.com/business_writing/2010/08/capitalizing-hyphenated-words-in-titles-.html
%}

%{\commentsm{Toryn/Maayan, can we called it a {\bf belief-based} account rather than a "belief-based" account?  I'm asking in the context of the canon.  Opinions?}} \commenttk{I think belief-based would be clearer. I'm not sure how people will understand ``level'' (and epistemic states might not be on the same ``level'' as beliefs, anyway).}

We appeal to logics of belief to provide a belief-based account of explanation in the context of Theory of Mind.
%
% In our account, we refer to the the `thing to be explained' as the \textit{explanandum}.
%
% agent providing the explanation as the \textit{explainer}, the recipient of the explanation as the \textit{explainee}, and the `thing to be explained' as the \textit{explanandum}. \commenttk{The word ``explainee'' was already used in the introduction...}

Many logical accounts of explanation assume the existence of a knowledge base---a logical axiomatization of the domain in terms of a set of formulae~(e.g., \cite{lev-bra-book04}). With such a knowledge base in hand, a popular logic-based characterization of explanation is in terms of abduction
%a theory formation account of abudction 
as follows.

% \commenttk{I haven't seen this described as ``knowledge-based'' before. That might be confused with Levesque's ``knowledge level account'' (which actually was about belief as well).}

\begin{definition}[Abductive Explanation (after \cite{poole-ijis90})]\label{def:abduction}
Given a logical theory, $T$, and an explanandum $O$, $E$ explains $O$ given a theory $T$ if
$T\cup E\models O$ and $T\cup E$ is consistent. 
\end{definition}

% \commentsm{Introduce explainer/nee/naton/nandum}
%\commenttk{
%We should have some definition of the standard %way of considering abduction in logic, something %like that $E$ explains $O$ given a theory $T$ if
%$T\cup E\models O$ and $T\cup E$ is consistent. %[citation needed] \
%}

%\commentsm{According to my records, this is the %origin of the notion of abduction as theory %formation \cite{poole-ijis90}. URL for paper is %commented out in the latex. %%https://www.cs.ubc.ca/~poole/papers/meth.pdf
%}

Here we make no such commitment to the representation of beliefs in terms of a set of logical formulae. Rather, in order to capture the diversity of human and machine explainers and explainees, our account finds its origins in works that attributed agents with mental states in the form of epistemic states (with seminal work by \citeauthor{gardenfors1988knowledge} \cite{gardenfors1988knowledge} and later notable work by \citeauthor{levesque1989knowledge} \cite{levesque1989knowledge}; \citeauthor{boutilier1995abduction} \cite{boutilier1995abduction}; \citeauthor{chajewska2013defining} \cite{chajewska2013defining}; and \citeauthor{halpern2005causes} \cite{halpern2005causes}).
%
% By defining explanations relative to the epistemic states of agents -- which capture agent beliefs -- we eschew assumptions about the particular syntactic form of agents' beliefs, in contrast to previous work.

% agents are attributed mental states in the form of epistemic states, $e$ (e.g., \cite{levesque1989knowledge,boutilier1995abduction,chajewska2013defining,halpern2005causes,nepomuceno2017abductive}). 

% our account finds its origins in works that presented models of explanation defined relative to the epistemic states of agents (with seminal work by \citeauthor{gardenfors1988knowledge} \cite{gardenfors1988knowledge} and some later notable work by \citeauthor{levesque1989knowledge} \cite{levesque1989knowledge}; \citeauthor{boutilier1995abduction} \cite{boutilier1995abduction}; \citeauthor{chajewska2013defining} \cite{chajewska2013defining}; and \citeauthor{halpern2005causes} \cite{halpern2005causes}).

%We do so in order to capture the role of Theory of Mind reasoning in the explanation process in the presence of the (possibly disparate) beliefs of multiple agents. 

\subsection{Mental States as Epistemic States}
We employ the notion of an epistemic state, $e$, or in the case of multiple agents, a collection of epistemic states, $\vec e$, to capture the beliefs of agents. These are used to provide the semantics for the language below.

We will suppose that we have a finite set of agents, $A=\{1,2,\dots,n\}$, and a set of propositional symbols $P$.
We define a language
\begin{align}
    \varphi ::= p\ | \neg\varphi\ |\ (\varphi\wedge\varphi)\ |\ B_i\varphi\ |\ [\varphi]_i\varphi\label{bnf}
\end{align}
where $p\in P$ and $i\in A$. 
We introduce $\bot$ as an abbreviation for $(p\wedge\neg p)$ for an arbitrary $p\in P$. %\commenttk{After just saying that we weren't going to commit to representing beliefs with logical formulas, now we're describing beliefs with logical formulas. Someone might get confused.}

The intended meaning of $B_i\varphi$ is that agent $i$ believes $\varphi$, and the intended meaning of $[\alpha]_i\varphi$ is that after agent $i$ revises their beliefs by $\alpha$, $\varphi$ is true.% (the revision occurs privately, i.e., without the other agent knowing that it has).

% \commentms{Some background on belief revision. Here?? following text taken verbatim from Yongmei's paper - TODO rephrase and take from other sources as well}
% \textcolor{red}{Belief change studies how an agent modifies her beliefs on receiving new information. Two main types of belief change are revision and update: revision concerns belief change about static environments due to partial and possibly incorrect information, whereas update concerns belief change about dynamic environments due to the performance of actions. Various guidelines for belief change have been proposed, and the most popular ones include the AGM postulates for belief revision [Alchourron et al., 1985], the KM postulates for belief update [Katsuno and Mendelzon, 1991], and the DP postulates for iterated belief revision [Darwiche and Pearl, 1997]. Katsuno and Mendelzon [1991] briefly discussed how belief revision and update can be used for reasoning about actions: if a condition $\phi$ is found true, the KB is revised with $\phi$; if an action with postcondition $\psi$ is performed, the KB is updated with $\psi$. There has been work extending belief change}\\
% \textcolor{red}{Katsuno and Mendelzon [1991] presented
% model-theoretic definitions of them: intuitively, $\phi \circ \psi$ selects
% from the models of $\psi$ those that are closest to models of $\phi$}

We assume that our epistemic states are such that we can say that a formula $\varphi$ is true at $e$ when $\varphi$ is believed. % (e.g., if $B_ip$ is true at $e$, that means that the agent with epistemic state $e$ believes that agent $i$ believes $p$).
To be clear, although we use formulas to describe what is believed, an epistemic state is not in general \emph{defined} as a set of formulas, nor required to be represented internally as one. %We do not require that the set of formulas made true by $e$ be closed under logical consequence.
For a conventional example, $e$ might be a set of possible worlds with accessibility relations and so on. However, we also allow for epistemic states to take very different forms. For example, one might want to model limited reasoning capabilities in some manner to avoid the so-called problem of logical omniscience \cite{Stalnaker1991}, in which agents unrealistically believe all the deductive consequences of their beliefs. We might also wish for our epistemic states to be realized in terms of a computer program, such as a neural network, or via a human brain. %\commentsm{Check!!}

Furthermore, we assume we have a \textit{revision operator} $*$ so that $e*\alpha$ is another epistemic state, the result of revising by $\alpha$. We will use * in defining the semantics for the $[\alpha]_i$ operator. Much as we have not committed to a particular structure for epistemic states, we will not commit to a particular revision operator. A large body of work has studied belief change in agents where belief revision typically concerns belief change in a static environment, possibly in the presence of incorrect and partial beliefs. Amongst the most popular guidelines for belief revision are the AGM postulates \cite{alchourron1985logic}, %KM postulates \cite{katsuno1991propositional}, TK: these are more for dealing with changes in the world, and aren't generally thought of as being about revision
and the DP postulates  \cite{darwiche1997logic} (for iterated revision). We will not require that our $*$ satisfies these properties except where noted. Similarly to the situation with our epistemic states, we might want our revision operator to be realized in terms of a computer program  or human reasoning. %CAREFUL
%More generally, just as our epistemic states can be realized in a diversity of ways, we similarly observe that our revision operators can be realized by a computer program or a human brain.\commentsm{check}

%Note an epistemic state ... is insufficient to assign truth values to all formulas in our language described by the BNF in (\ref{bnf}).  For example, it cannot determine whether an atomic proposition $p$ is true. A model of the world -- what's actually true, and not just believed by some agent -- would be needed. However, there is a subset of the language, consisting of formulas which are concerned only with beliefs, whose truth can be defined purely in terms of epistemic states. We define this subset of formulas below:

While epistemic states assign a truth value to any formula in our language -- the language given by the grammar in (\ref{bnf}) -- that value indicates whether the formula is believed by the agent in question, not whether it's actually true. From an objective point of view, the formulas whose truth values we can determine are from the subset of the language consisting of formulas which are concerned only with beliefs. We define this subset of formulas below:
\begin{definition}[Agent Formula]
An agent formula is one in which no atomic symbol appears outside the scope of a belief operator, i.e., a formula $\phi$ of the form
\begin{align}
    \phi::= B_i\varphi\ |\ \neg\phi\ |\ (\phi\wedge\phi)\ |\ [\varphi]_i\phi
\end{align}
where $\varphi$ is any (possibly non-agent) formula.
\end{definition}
We assign truth values to agent formulas with a collection of epistemic states $\vec e =e_1,\dots,e_n$ (corresponding to the different agents) according to the satisfaction relation $\models$ below.
%{\commentsm{"sequence of epistemic states" -- sequence suggests an ordering.  Would "collection" be a better word? Technically a vector *is* an ordered collection, but I think we want to de-emphasize the ordering aspect since it isn't important to our use here.} \commenttk{I had thought about having a set $E$ of epistemic states, but that looked like it might mean ``$E$xplanation''.}\commentsm{*NEW Toryn*, are you agreeable with replacing the word "sequence" with "collection"? If yes, please change and remove this discussion. Thx.}
\begin{itemize}
    \item $\vec e\models B_i\varphi$ iff $\varphi$ is true at $e_i$
    \item $\vec e \models \neg\phi$ iff $\vec e\not\models\phi$
    \item $\vec e\models (\phi\wedge\psi)$ iff $\vec e\models\phi$ and $\vec e\models\psi$
    \item $\vec e\models [\alpha]_i\phi$ iff $\langle e_1,\dots, (e_i*\alpha), \dots, e_n\rangle\models\phi$
\end{itemize}
Note that the semantics of the $[\alpha]_i$ operator is defined using the revision operator.

Give this abstract framework for talking about beliefs, we can define explanations.
The lack of commitment to the form of the epistemic state and revision operator is important because it affords us the ability to model a diversity of agents. In so doing, for the definitions of explanation that follow, the explainer will have beliefs about the other agents' beliefs and about their revision operators, and the effectiveness of the explainer's explanations for any particular agent will rely on the fidelity of those beliefs.

\subsection{Characterizing Explanations}

\begin{definition}[Explanation]\label{def:expl}
Given epistemic states $\vec e$, we say that $\alpha$ explains $\beta$ for agent $i$ if $\vec e\models [\alpha]_i (B_i\beta\wedge\neg B_i\bot)$.
\end{definition}

\noindent {\bf Notation:}
For notational 
convenience, we define $\expl(i,\alpha,\beta)$ as an abbreviation for $[\alpha]_i (B_i\beta\wedge\neg B_i\bot)$.\\

That is, $\alpha$ explains $\beta$ if revising by $\alpha$ makes agent $i$ believe $\beta$ while still having consistent beliefs.\footnote{
If agent $i$ is not logically omniscient, requiring $i$ to not believe $\bot$ may not prevent $i$'s beliefs from being inconsistent in some subtler way. For example, $i$ might both believe $p$ and believe $\neg p$, even though it does not believe $(p\wedge\neg p)$.} Note that (with respect to revising by non-modal formulas) if revision of agent $i$'s epistemic state satisfies the AGM postulates, then the result of revision will be inconsistent only if either the agent initially had inconsistent beliefs, or if $\alpha$ itself is inconsistent. 

Intuitively, our definition of explanation allows for more explanations than the traditional account in Definition~\ref{def:abduction}. For one thing, we allow explanations to refer to modal operators. Even without that, though, an important difference is that our definition is in terms of belief revision and so allows for an explanation that isn't consistent with the agent's initial beliefs. Our account builds upon prior accounts of explanation defined relative to belief revision such as \citeauthor{boutilier1995abduction} \cite{boutilier1995abduction} and \citeauthor{nepomuceno2017abductive} \cite{nepomuceno2017abductive}.

% (\citeauthor{boutilier1995abduction}'s \cite{boutilier1995abduction} was also, in the setting of a single agent)
% Second, our account appeals to accounts of explanation defined relative to belief revision (e.g., \citeauthor{boutilier1995abduction} \cite{boutilier1995abduction}; \citeauthor{nepomuceno2017abductive} \cite{nepomuceno2017abductive}). By appealing to a belief revision operator, our account allows an explainer to reason about the mechanism by which the explainee revises her beliefs given new information, which in turn allows the explainer to tailor the explanation appropriately. Finally, our account is set in a multi-agent setting, allowing for the representation of multiple explainers and explainees of a myriad of different types.

To make the comparison more explicit, consider defining an epistemic state $e_i$ as a  propositional theory $T$, as in the following theorem.
\begin{theorem}\label{thm:subsume}Suppose that $e_i$ is defined as being a propositional theory $T$, and that the formulas $e_i$ makes true are defined to be the logical consequences of $T$ (note that these are restricted to the non-modal subset of our language). 
Suppose furthermore that the revision operator $*$ on $e_i$ satisfies the AGM postulates (w.r.t. non-modal formulas).
Then for non-modal formulas $\alpha$ and $\beta$, $\vec e\models \expl(i,\alpha,\beta)$ if $T\cup \{\alpha\}$ is consistent and $T\cup\{\alpha\}\models \beta$.
\end{theorem}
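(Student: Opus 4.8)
The plan is to unfold the abbreviation $\expl(i,\alpha,\beta)=[\alpha]_i(B_i\beta\wedge\neg B_i\bot)$ together with the satisfaction clauses, reducing the goal to two concrete claims about the revised epistemic state $e_i*\alpha$. By the clause for $[\alpha]_i$, we have $\vec e\models\expl(i,\alpha,\beta)$ exactly when $\langle e_1,\dots,e_i*\alpha,\dots,e_n\rangle\models B_i\beta\wedge\neg B_i\bot$, and by the clauses for $B_i$, $\wedge$, and $\neg$ this amounts to: (i) $\beta$ is true at $e_i*\alpha$, and (ii) $\bot$ is not true at $e_i*\alpha$. Since beliefs at every epistemic state here are the logical consequences of the underlying theory, I would write $K=\mathrm{Cn}(T)$ for the belief set of $e_i$ and $K*\alpha$ for the belief set of $e_i*\alpha$; then (i) becomes $\beta\in K*\alpha$ and (ii) becomes that $K*\alpha$ is consistent.

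Next I would dispatch the consistency claim (ii) using the AGM consistency postulate. The hypothesis that $T\cup\{\alpha\}$ is consistent implies in particular that $\alpha$ is satisfiable, so $\not\vdash\neg\alpha$. The consistency postulate then guarantees that $K*\alpha$ is consistent -- this is precisely the observation recorded after Definition~\ref{def:expl} -- so $\bot\notin K*\alpha$ and (ii) holds.

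For claim (i), the key preliminary step is to argue that $\neg\alpha\notin K$: if $\neg\alpha$ were a consequence of $T$, then $T\cup\{\alpha\}$ would be inconsistent, contradicting the hypothesis. With $\neg\alpha\notin K$ in hand, the AGM vacuity postulate yields $\mathrm{Cn}(K\cup\{\alpha\})\subseteq K*\alpha$. Finally, since $\mathrm{Cn}(T\cup\{\alpha\})=\mathrm{Cn}(K\cup\{\alpha\})$ by idempotence of $\mathrm{Cn}$, the hypothesis $T\cup\{\alpha\}\models\beta$ gives $\beta\in\mathrm{Cn}(K\cup\{\alpha\})\subseteq K*\alpha$, establishing (i). Combining (i) and (ii) and folding the abbreviation back up yields $\vec e\models\expl(i,\alpha,\beta)$.

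The routine parts are the unfolding of the semantics and the passage between the theory $T$ and its closure $K$. The main thing to get right is the alignment between the abstract framework and the AGM machinery: the AGM postulates are stated for deductively closed belief sets, so one must phrase ``truth at $e_i$'' as membership in $\mathrm{Cn}(T)$ and must apply the vacuity postulate only after verifying its precondition $\neg\alpha\notin K$. That precondition is exactly where the consistency of $T\cup\{\alpha\}$ is invoked a second time, so I expect the careful justification of $\neg\alpha\notin K$ to be the one non-mechanical link in the argument.
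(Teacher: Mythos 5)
Your proof is correct and follows essentially the same route as the paper's: the paper likewise invokes the AGM vacuity postulate to identify $T*\alpha$ with the expansion $\mathrm{Cn}(T\cup\{\alpha\})$ and concludes $T*\alpha\models\beta$. Your treatment is simply more explicit, verifying the vacuity precondition $\neg\alpha\notin K$ and discharging the $\neg B_i\bot$ conjunct via the consistency postulate, whereas the paper leaves the consistency of the revised state implicit in the identification with the (consistent) expansion.
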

\begin{proof}
Because $T\cup\{\alpha\}$ is consistent, by the AGM ``vacuity'' postulate, $T*\alpha$ is equal to the expansion of $T$ by $\alpha$, that is, the closure of $T\cup\{\alpha\}$. Therefore, $T*\alpha\models \beta$.
\end{proof}
However, we may also get further explanations. In the circumstances described by Theorem~\ref{thm:subsume}, if $T\cup\{\beta\}$ is inconsistent, then Definition~\ref{def:abduction} would say there are no explanations of $\beta$ given the theory $T$, while there may be formulas that agent with epistemic state $T$ can revise by that would make them believe $\beta$.

%\commentsm{Toryn thanks for the addition. I think your footnote is still too opaque for mere mortals who are reading this. :-) I would just say that following suitable revision the agents beliefs should be consistent and thus not include $\bottom$.  Stating this explicitly in the definition avoids the case where $\alpha$ itself is inconsistent or XXX.}
%\commenttk{Note that the footnote was already there before our conversation, and wasn't meant to address what we talked about.}

%OK. Thanks. I didn't note that.  Anyway...more detail if you can!  Thank you.

%\noindent {\bf Notation:~}For notational 
%convenience, we define $\expl(i,\alpha,\beta)$ as an abbreviation for $[\alpha]_i (B_i\beta\wedge\neg B_i\bot)$. We will commonly refer to $\beta$, the thing to be explained, as the \textit{explanandum}.

It is also possible to talk in the language about agents' beliefs about $\expl(i,\alpha,\\\beta)$, i.e. about whether $\alpha$ explains $\beta$ for agent $i$. 

% We can define $\bexpl(i, j, \alpha,\beta)$ as an abbreviation for $B_i\expl(j,\alpha,\beta)$. That is, agent $i$ believes that $\alpha$ explains $\beta$ for agent $j$.

\begin{definition}[Subjective Explanation]
Given epistemic states $\vec e$, we say that $\alpha$ explains $\beta$ for agent $j$ from agent $i$'s perspective, if $\vec e \models \bexpl(j,\alpha,\beta)$.

% denoted as $\bexpl(i, j, \alpha,\beta)$, if $\vec e \models B_i\expl(j,\alpha,\beta)$.
\end{definition}

% \commenttk{The $\bexpl$ abbreviation looks almost exactly like what it expands to. I'm not sure what it adds, other than being less clear which of $i$ and $j$ is which.}

% \commentsm{Yes, in some ways using $B_i\expl(j,\alpha,\beta)$ seems more elegant/transparent, but see whethte  the $\bexpl$ notation is useful/necessary/warranted somewhere.}

% \begin{theorem}
% Given epistemic states $\vec e$ and explanandum $\beta$, if $e_i = e_j$ it then follows that for all $\alpha$, $\expl(i,\alpha,\beta)$ iff $\expl(j,\alpha,\beta)$. \commentms{can we say $e_i = e_j$ since we haven't really committed to anything about their structure?}
% \end{theorem}

% An explanation given with Theory of Mind is equivalent
% to an explanation given without Theory of Mind When
% the epistemic state of Agent i equivalent to the epistemic
% state of Agent j

% \commenttk{I wonder if we should have an example environment like the definition one. The first-line indentation looks a bit funny.}

% \textbf{Example. } 

\begin{example}
We formalize our example from Section~\ref{sec:intro}. We assume that Mary, Bob and Tom all believe (and believe that the other agents believe) $rain \land holeInRoof \rightarrow wetFloor$.

% The set of agents $A$ comprises three agents, Bob, Mary, and Tom. Initially, $\vec e \models B_{Bob} wetFloor \land B_{Bob} holeInRoof \land B_{Bob} rain \land B_{Bob} B_{Mary} \lnot wetFloor \land B_{Bob} B_{Mary} \lnot rain \land B_{Bob} B_{Mary} \lnot holeInRoof    \land B_{Bob} B_{Tom} wetFloor \land B_{Bob} B_{Tom} rain \land B_{Bob} B_{Tom} \lnot holeInRoof $

\noindent\makebox[\textwidth]{\rule{4.8in}{0.4pt}}
\vspace{-1.5em}
\begin{itemize}\itemsep1em
    \item[] $A = \{ \textit{Mary}, \textit{Bob}, \textit{Tom} \}$ 
    % comprises three agents, Bob, Mary, and Tom.
    
    \item[] $\vec e \models B_{Mary} wetFloor \land B_{Mary} holeInRoof \land B_{Mary} rain$
    
    \item[] $\vec e \models B_{Mary} B_{Bob} \lnot wetFloor \land B_{Mary} B_{Bob} \lnot rain \land B_{Mary} B_{Bob} holeInRoof$
    
    \item[] $\vec e \models B_{Mary} B_{Tom}  \lnot wetFloor \land B_{Mary} B_{Tom} rain \land B_{Mary} B_{Tom} \lnot holeInRoof$
    
    % \item[] $\vec e \models \expl(Mary,rain,wetFloor)$
    
    % \item[] $\vec e \models \expl(Tom,holeInRoof,wetFloor)$
    
    \item[] $\vec e \models B_{Mary}\expl(Bob,rain,wetFloor)$
    
    \item[] $\vec e \models B_{Mary}\expl(Tom,holeInRoof,wetFloor)$
    
\end{itemize}
%\end{small}
\vspace{-1em}
\noindent\makebox[\textwidth]{\rule{4.8in}{0.4pt}}
% \commenttk{say something about assumption about agent's reasoning capabilities -- and I guess we're also making assumptions about the revision operators. So maybe something like: We also assume that the agents are able to draw at least simple inferences (and each knows that the others will) and their belief revision operators behave in a sensible way (and each knows that the others' operators do so.}
%We also will suppose that the agent's epistemic states are such that they can all perform modus ponens (so, e.g., any of them would be able to derive $wetFloor$ from the premises $rain \land holeInRoof \rightarrow wetFloor$ and $rain \land holeInRoof$) and that each believes that the others can do so.

We also assume that the agents are able to draw at least simple inferences (and each knows that the others will) and their belief revision operators behave in a sensible way (and each knows that the others' operators do so).

\end{example}

% In our example, given epistemic states $\vec e$, $\expl(Mary,rain,wetFloor)$. That is, a possible explanation for Mary for $wetFloor$ is $rain$.

We define a relation $\approx$ that can be understood intuitively as equating two epistemic states, $e_i$ and $e_j$. For $e_i \approx e_j$ to hold, %does not require that 
the internal structures of the states $e_i$ and $e_j$ need not be the same, but they must support the same beliefs as each other, and must continue to do so after any sequence of revisions.
%
%
%For $e_i \approx e_j$ to hold does not require that 
%the internal structures of the states $e_i$ and $e_j$ be the same, but that they support the same beliefs as each other, and that after any sequence of revisions they continue to do so.
%
%In particular, given epistemic states $\vec e$, 
Formally, we say that $e_i \approx e_j$ if
\begin{itemize}
    \item $\vec e \models B_i \varphi$ \textit{iff} $\vec e \models B_j \varphi$ 
    \item and for any sequence of formulas $\alpha_1,\dots,\alpha_k$, we have that $\vec e\models [\alpha_1]_i\cdots [\alpha_k]_i B_i\varphi$ iff $\vec e\models [\alpha_1]_j\cdots [\alpha_k]_j B_j\varphi$ 
\end{itemize}

% \commentms{say formally: and we assume the agents share a revision operator and reasoning/entailment mechanisms.}

% Given epistemic states $\vec e$, we say that 

\begin{theorem}
Given epistemic states $\vec e$ and explanandum $\beta$, if $e_i \approx e_j$ it then follows that for all $\alpha$, $\vec e \models \expl(i,\alpha,\beta)$ iff $\vec e \models \expl(j,\alpha,\beta)$.
\end{theorem}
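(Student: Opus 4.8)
The plan is to unfold the abbreviation $\expl(i,\alpha,\beta) = [\alpha]_i(B_i\beta\wedge\neg B_i\bot)$ and push the truth evaluation through the semantics of the revision modality, thereby reducing the claim to two instances of the second clause in the definition of $\approx$ applied to revision sequences of length one.

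First I would apply the semantics of the $[\alpha]_i$ operator: by definition $\vec e\models[\alpha]_i\phi$ iff $\vec e\,'\models\phi$, where $\vec e\,' = \langle e_1,\dots,(e_i*\alpha),\dots,e_n\rangle$. Since $*$ returns a single epistemic state, $\vec e\,'$ is a single well-defined collection, so evaluation of the Boolean connectives at $\vec e\,'$ behaves standardly. Hence $\vec e\models[\alpha]_i(B_i\beta\wedge\neg B_i\bot)$ holds iff both $\vec e\,'\models B_i\beta$ and $\vec e\,'\models\neg B_i\bot$ hold. Re-expressing these two conditions in terms of the modality, this is equivalent to the conjunction of $\vec e\models[\alpha]_i B_i\beta$ and $\vec e\not\models[\alpha]_i B_i\bot$ (the latter because $\neg B_i\bot$ is true at $\vec e\,'$ iff $B_i\bot$ is not, which by the modal semantics is iff $\vec e\not\models[\alpha]_i B_i\bot$).

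The decisive step is then to invoke the second clause of $e_i\approx e_j$ with the length-one sequence $\alpha_1 = \alpha$. Instantiating $\varphi := \beta$ gives $\vec e\models[\alpha]_i B_i\beta$ iff $\vec e\models[\alpha]_j B_j\beta$, and instantiating $\varphi := \bot$ (recall that $\bot$ abbreviates $p\wedge\neg p$) gives $\vec e\models[\alpha]_i B_i\bot$ iff $\vec e\models[\alpha]_j B_j\bot$, whence their negations are also equivalent. Combining the two equivalences, the characterization of $\expl(i,\alpha,\beta)$ obtained above transfers verbatim to agent $j$, yielding $\vec e\models\expl(i,\alpha,\beta)$ iff $\vec e\models\expl(j,\alpha,\beta)$; since $\alpha$ was arbitrary, this completes the argument.

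I expect the only real subtlety to be the decomposition step, namely justifying that the truth of $[\alpha]_i(B_i\beta\wedge\neg B_i\bot)$ splits into separate statements about $[\alpha]_i B_i\beta$ and $[\alpha]_i B_i\bot$. This relies on $*$ being a function (a single resulting state $e_i*\alpha$), so that conjunction and negation distribute over evaluation at $\vec e\,'$; were revision nondeterministic, this factoring could fail. I would also note that although the definition of $\approx$ quantifies over arbitrary revision sequences, the proof requires only the $k=1$ case, so neither the first clause of $\approx$ nor the longer sequences are needed here.
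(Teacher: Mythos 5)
Your proof is correct and follows essentially the same route as the paper's: the paper likewise observes that $\vec e\models \expl(i,\alpha,\beta)$ iff $\vec e\models [\alpha]_i B_i\beta$ and $\vec e\models [\alpha]_i \neg B_i\bot$, and then appeals directly to the definition of $\approx$ (your explicit use of the $k=1$ case of the second clause, with $\varphi:=\beta$ and $\varphi:=\bot$, just spells out what the paper leaves implicit). Your added remarks on the functionality of $*$ justifying the decomposition are a sound elaboration rather than a departure.
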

\begin{proof}
Note that $\vec e\models \expl(i,\alpha,\beta)$ iff $\vec e\models [\alpha]_i B_i\beta$ and $\vec e\models [\alpha]_i \neg B_i\bot$, and similarly for agent $j$. The result follows from the definition of $\approx$.
\end{proof}
% \begin{pro}
% Given epistemic states $\vec e$ and explanandum $\beta$, if $e_i = e_j$ it then follows that for all $\alpha$, $\vec e \models \expl(i,\alpha,\beta)$ iff $\vec e \models \expl(j,\alpha,\beta)$. 
% % iff $\bexpl(j,\alpha,\beta)$.
% \end{pro}

% \commenttk{I'm not sure this (above proposition) works. Couldn't two agents have identical beliefs without knowing that they did?}

% Consider the (granted, unrealistic) case where, for agents $i$ and $j$, an objective explanation for the former is also an objective explanation for the latter. Then, agent $i$, acting as the explainer, need not employ its Theory of Mind and reason about agent $j$'s beliefs in order to generate explanations for the latter. Of course, in this case $e_i$ may still hold inaccurate beliefs pertaining to $e_j$. Thus, while any $\alpha$ that explains $\beta$ may be an objective explanation for both agents $i$ and $j$, agent $i$ need not necessarily \textit{believe} that $\alpha$ is an explanation for $j$.

% \eat{Further, any subjective explanation for agent $j$ from agent $i$'s perspective is an objective explanation for agent $j$. }

That is, when $e_i \approx e_j$, an objective explanation for the former is also an objective explanation for the latter. Therefore, agent $i$, acting as the explainer, need not employ its Theory of Mind and reason about agent $j$'s beliefs in order to generate explanations for the latter. However, the fact that $e_i \approx e_j$ does not mean that $e_i$ holds accurate beliefs pertaining to how $e_j$ revises its beliefs. Thus, while any $\alpha$ that explains $\beta$ may be an objective explanation for both agents $i$ and $j$, agent $i$ need not necessarily \textit{believe} that $\alpha$ is an explanation for $j$.
Nonetheless, $e_i\approx e_j$ is quite strong, %is a quite strong condition
as illustrated by the following theorem.%CAREFUL
\begin{theorem}
Suppose $e_j$ supports positive and negative introspection -- i.e., $\vec e\models (B_j \varphi \equiv B_jB_j\varphi)\wedge (\neg B_j\varphi \equiv B_j\neg B_j\varphi)$. Then if $e_i\approx e_j$, agent $i$ will have correct beliefs about $j$'s beliefs, i.e., $\vec e \models ( B_j\varphi\equiv B_iB_j\varphi )\wedge (\neg B_j\varphi\equiv B_i\neg B_j\varphi)$.
\end{theorem}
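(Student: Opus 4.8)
The plan is to establish the two biconditionals in the conclusion independently, in each case peeling off agent $i$'s outer belief using only the first clause of the definition of $\approx$ (belief coincidence) and then collapsing a doubled $j$-belief with $j$'s introspection. Crucially, the belief-coincidence clause, $\vec e\models B_i\psi$ iff $\vec e\models B_j\psi$, holds for \emph{every} formula $\psi$, so I am free to instantiate it at compound formulas of the form $B_j\varphi$ and $\neg B_j\varphi$; this instantiation is really the only idea in the argument.

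For the positive half, $\vec e\models B_j\varphi\equiv B_iB_j\varphi$, I would argue as a chain of equivalences. First, positive introspection of $e_j$ gives $\vec e\models B_j\varphi$ iff $\vec e\models B_jB_j\varphi$. Next, instantiating the belief-coincidence clause of $\approx$ at $\psi = B_j\varphi$ gives $\vec e\models B_jB_j\varphi$ iff $\vec e\models B_iB_j\varphi$. Chaining these two equivalences yields $\vec e\models B_j\varphi$ iff $\vec e\models B_iB_j\varphi$, which is exactly the first conjunct of the conclusion.

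The negative half, $\vec e\models\neg B_j\varphi\equiv B_i\neg B_j\varphi$, is perfectly symmetric. Negative introspection of $e_j$ gives $\vec e\models\neg B_j\varphi$ iff $\vec e\models B_j\neg B_j\varphi$, and the belief-coincidence clause instantiated at $\psi=\neg B_j\varphi$ gives $\vec e\models B_j\neg B_j\varphi$ iff $\vec e\models B_i\neg B_j\varphi$; chaining these gives the second conjunct.

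Finally, a remark on difficulty: there is essentially no obstacle here beyond bookkeeping. I would note in particular that the second (iterated-revision) clause of $\approx$ plays no role, since belief coincidence alone together with $j$'s introspection suffices, and that the hypothesis asks only for $e_j$ to be introspective, not $e_i$. The one place to be careful is the scope of the quantifier over formulas in the definition of $\approx$: the argument depends on being able to apply belief coincidence to the modal formulas $B_j\varphi$ and $\neg B_j\varphi$ rather than merely to $\varphi$, so I would make sure that the first clause of $\approx$ is genuinely universally quantified over all formulas of the language in (\ref{bnf}), not only over non-modal ones.
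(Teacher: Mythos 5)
Your proof is correct and takes essentially the same route as the paper's: apply $j$'s introspection to pass between $B_j\varphi$ (resp.\ $\neg B_j\varphi$) and $B_jB_j\varphi$ (resp.\ $B_j\neg B_j\varphi$), then instantiate the belief-coincidence clause of $\approx$ at the modal formulas $B_j\varphi$ and $\neg B_j\varphi$ to transfer the belief to agent $i$. The only difference is presentational: you chain biconditionals and so get both directions of each equivalence explicitly, whereas the paper writes out just the forward implications, and your observation that the iterated-revision clause of $\approx$ is not needed is likewise consistent with the paper's argument.
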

\begin{proof}
If agent $j$ believes $\varphi$, then we'll have that $\vec e\models B_j B_j \varphi$ (by positive introspection) and then $\vec e \models B_i B_j \varphi$ (because $i\approx j$). Similarly, if agent $j$ disbelieves $\varphi$, then  $\vec e\models B_j \neg B_j \varphi$ (by negative introspection) and so $\vec e \models B_i \neg B_j \varphi$.
\end{proof}

In some cases, an explanation need not cause the explanandum to be entailed by the epistemic state, but rather cause it to be \textit{possible} in the epistemic state. This type of explanation is similar to \citeauthor{boutilier1995abduction}'s \textit{might explanation}.

% \commenttk{The following is more like Boutilier's ``might explanations''.}
\begin{definition}[Inconsistency-resolving Explanation]\label{def:might_expl}
Given epistemic states $\vec e$, we say that $\alpha$ explains the possibility of $\beta$ for agent $i$ if $\vec e\models[\alpha]_i\neg B_i\neg \beta$.
\end{definition}

 This is a weaker form of explanation but important in various settings such as when an agent is attempting to find an explanation that will allow the behavior of another agent or in consistency-based diagnosis, where the agent is attempting to identify the abnormal componenets in a system that allow for the observed behavior of the system.

\begin{theorem}
Given epistemic states $\vec e$ and explanandum $\beta$, then for all $\alpha$, if $\vec e \models \expl(i,\alpha,\beta)$ it then follows that $\alpha$ is an inconsistency-resolving explanation for $\beta$ for agent $i$, assuming that $\vec e \models [\alpha]_i\big((B_i \beta \land B_i \lnot\beta) \rightarrow B_i\bot\big)$, i.e., that the agent can perform enough reasoning to notice the inconsistency in believing both $\beta$ and $\neg\beta$.
% we say that $\alpha$ explains the possibility of $\beta$ for agent $i$ if $\vec e\models[\alpha]_i\neg B_i\neg \beta$.
\end{theorem}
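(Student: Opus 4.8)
The plan is to unfold the semantics of the dynamic operator $[\alpha]_i$ and reduce all three formulas --- the hypothesis $\expl(i,\alpha,\beta)$, the stated assumption, and the desired conclusion --- to claims about a single modified collection of epistemic states. Writing $\vec e\,'$ for $\langle e_1,\dots,(e_i*\alpha),\dots,e_n\rangle$, the satisfaction clause for $[\alpha]_i$ tells us that $\vec e\models[\alpha]_i\phi$ holds exactly when $\vec e\,'\models\phi$. So the whole argument takes place at $\vec e\,'$, and the problem becomes purely propositional reasoning over the three belief-formulas $B_i\beta$, $B_i\neg\beta$, and $B_i\bot$.

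First I would extract what the hypothesis gives. Since $\expl(i,\alpha,\beta)$ abbreviates $[\alpha]_i(B_i\beta\wedge\neg B_i\bot)$, the hypothesis $\vec e\models\expl(i,\alpha,\beta)$ yields $\vec e\,'\models B_i\beta\wedge\neg B_i\bot$, and hence both $\vec e\,'\models B_i\beta$ and $\vec e\,'\models\neg B_i\bot$. Likewise, the assumption $\vec e\models[\alpha]_i\big((B_i\beta\wedge B_i\neg\beta)\rightarrow B_i\bot\big)$ unfolds to $\vec e\,'\models(B_i\beta\wedge B_i\neg\beta)\rightarrow B_i\bot$, reading $\rightarrow$ as the usual material conditional.

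Next I would argue by contradiction to establish the conclusion. The target, by Definition~\ref{def:might_expl}, is $\vec e\models[\alpha]_i\neg B_i\neg\beta$, i.e.\ $\vec e\,'\models\neg B_i\neg\beta$, equivalently $\vec e\,'\not\models B_i\neg\beta$. Suppose instead that $\vec e\,'\models B_i\neg\beta$. Combining this with $\vec e\,'\models B_i\beta$ gives $\vec e\,'\models B_i\beta\wedge B_i\neg\beta$, and the unfolded assumption then forces $\vec e\,'\models B_i\bot$. But this directly contradicts $\vec e\,'\models\neg B_i\bot$ obtained above. Hence $\vec e\,'\not\models B_i\neg\beta$, which is precisely $\vec e\,'\models\neg B_i\neg\beta$, and folding the operator back in yields $\vec e\models[\alpha]_i\neg B_i\neg\beta$, as required.

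I do not expect any genuine obstacle here: the result is a direct semantic unfolding followed by one modus-ponens step. The only point requiring care is that the consistency conjunct $\neg B_i\bot$ baked into the definition of $\expl$ is exactly what the argument consumes --- without it, believing both $\beta$ and $\neg\beta$ after revision would not be ruled out. The stated reasoning assumption is what bridges ``believing both $\beta$ and $\neg\beta$'' to the explicit inconsistency $B_i\bot$ that the explanation's consistency condition forbids; this bridge is needed precisely because agents here need not be logically omniscient, so $B_i\beta\wedge B_i\neg\beta$ does not automatically entail $B_i\bot$.
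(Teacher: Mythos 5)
Your proof is correct and matches the paper's intent: the paper simply asserts that the result ``follows straightforwardly from Definitions~2 and~3,'' and your semantic unfolding at the revised state $\vec e\,'$ --- extracting $B_i\beta$ and $\neg B_i\bot$ from the hypothesis, then using the assumed conditional to rule out $B_i\neg\beta$ --- is exactly that straightforward argument made explicit. Your closing observation about why the bridging assumption is needed (non-omniscient agents need not derive $B_i\bot$ from $B_i\beta\wedge B_i\neg\beta$) also accords with the paper's footnote on logical omniscience.
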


This follows straightforwardly from Definitions~\ref{def:expl} and \ref{def:might_expl}.\\

\noindent\textbf{Explanations Involving Agent Beliefs}\\
%
% \commentsm{ *IMPORTANT GENERAL COMMENT*  Your example below inspired me in an additional way. In the later Section entitled "Explaining to Multiple Agents" you talk about an explainER generating something that serves as an explanation for multiple explaiNEES.  Your example that follows inspires a slightly different notion of an explanation for multiple different explaiNERS.  The explanation must be one that both Bob and Mary believe to constitute an explantation for Tom.  PROPOSAL - in the "Explaining to Multiple Agents" section, perhaps also introduce this idea of multiple explainers to beef up that section.
% }\\
% \commentms{That sounds good! will do. Do we also want to emphasize this point of `explanations may include epistemic components' here at this early point in the paper?}
% \commentsm{Yes.  I really liked the text you have and the example and you should keep it (I might change the subtitle later - for now it helps partition the text and highlight major points.}
%
Importantly, an explainer can utilize its Theory of Mind to generate explanations pertaining to the mental states of other agents, such as their beliefs or goals.

\begin{example}
Let us reconsider our example where this time, after Mary explains $wetFloor$ to Bob, he asks her why Tom doesn't know $wetFloor$. That is, the explanandum $\beta$ is $\lnot B_{Tom}wetFloor$. A possible explanation is then\\ $B_{Tom} \lnot holeInRoof$, assuming Bob believes $B_{Tom} rain$.
\end{example}

%  plan or goal recognition where an agent is attempting to find an explanation that will allow for the behavior of another agent.

% {\commentsm{I like both of these.}}

\subsection{Explanations Involving Multiple Agents}

% \subsection{(Multiple) Agents Explaining to Multiple Agents}
% \commentms{OR}
% \commentsm{"Explanations Involving Multiple Agents"\\
% "Explanations Involving Many Agents" ???}
% \subsection{Agent(s)-to-Agent(s) Explanations}

% \commentsm{[Sun AM] Nice! Though pretty straightforward I think this is something that could be unique to us.\\
% The following is a suggestion.  Think about it and if you disagree let's leave it as is. I'm torn. Right now the "definitions" are introduced as "For example ..." I actually like the way you presented it, but to "grandize" it more, you could try a rephrasing it  as a definition.  The definition will likely look pretty trivial, but nevertheless, I think it's worth stating as a definition.  Don't use the B-Expl abbreviation but perhaps go back to the revision operator.  Please keep both for now and we can scrutinize.  Informally, we want an explanation that when provided to each of the agents allows them to explain.  Where it's tricky/interesting is how this is done in a way that is "best", but I think we don't have space/time for this.  One way to do best is like multi-objective optimization -- a form of pareto optimality.\\
% The advantage of having a definition is that then any theorems that are proven for the simple case can have a corollary that applies for this case as well.}
An interesting setting that is %easlity represented in 
straightforwardly captured by our framework is one in which an explainer (or explainers) is attempting to explain multiple (possibly disparate) explanandums to multiple explainees. 

% For example, agent $i$ may want to find an $\alpha$ that explains $\beta$ for agent $j$ and for agent $k$:

% \begin{align*} %[Explanation to Multiple Agents]
% \vec e \models \bexpl( j, \alpha,\beta) \land \bexpl( k, \alpha,\beta)
% \end{align*}

% \commentms{Option - if we want a definition. we can also first define an objective explanation for multiple agents and/or do it in terms of the revision operator}\\
% \commentms{have a general def where alpha explain beta i and beta j to agents i and j, respectively}
% \begin{definition}
% Given epistemic states $\vec e$, we say that $\alpha$ explains $\beta$ for agents $j$ and $k$ from agent $i$'s perspective if $\vec e \models \bexpl( j, \alpha,\beta) \land \bexpl( k, \alpha,\beta)$.
% \end{definition}

\begin{definition}\label{def:multi_agent_expl}
Given epistemic states $\vec e$ and explanandums $\beta_j$, $\beta_k$, \ldots $\beta_l$, we say that $\alpha$ explains $\beta_j$, $\beta_k$, \ldots $\beta_l$ from agent $i$'s perspective for agents $j$, $k$, \ldots $l$, respectively, if $\vec e \models \bexpl( j, \alpha,\beta_j) \land \bexpl( k, \alpha,\beta_k) \land \ldots \land \bexpl( l, \alpha,\beta_l)$.
\end{definition}

Consider a collaborative card game (e.g., Hanabi \cite{bard2020hanabi}) where a certain player is attempting to make different players (each with a unique epistemic state) understand different things with a single piece of information about another player's cards, publicly announced to all players. The explaining player should therefore find an $\alpha$ that explains different explanandums for the different players, given the explaining player's beliefs about the other players' beliefs.

\begin{example}
In a simpler setting such as our running example, if Mary is trying to explain $wetFloor$ to Bob and Tom at the same time, the explanation $\alpha$ could be $rain \land holeInRoof$, where the explanandum for both Bob and Tom is $wetFloor$.
\end{example}

% \citeauthor{sreedharan2018handling}[\citeyear{sreedharan2018handling}] discuss how an agent may explain its plan to multiple agents that possibly hold disparate mental states.

\subsubsection*{Privacy}

Our framework can also capture a notion of privacy. For example, the explainer (agent $i$) may want to generate an explanation $\alpha$ that explains the explanandum $\beta$ to some agents (agent $j$) but not to others (agent $k$):
% For example, agent $i$ may want to find an $\alpha$ that explains $\beta$ for agent $j$ but not for agent $k$:
\begin{align*}
\vec e \models \bexpl( j, \alpha,\beta) \land B_i\lnot(\expl(k, \alpha,\beta))
\end{align*}

% That is, $\alpha$ explains $\beta$ for agent $j$ but not for agent $k$.

\begin{example}
If Mary, for some reason, wants only Bob to entail $wetFloor$, the explanation $\alpha$ could be $rain$ in which case Bob will entail $wetFloor$ but Tom will not. One can imagine parent \#1 wanting to explain something to parent \#2 such that their child does not understand.
\end{example}

% We note that deception is extremely difficult to define and while it tends to imply wrong-doing, Masters aptly says that \textit{``when we rename deception as “surprise” or “privacy”, the ethical dilemma tends to disappear"} \cite{peta_thesis}. For a fascinating discussion on the many facets of deception, we refer the reader to \cite{peta_thesis}, Section 2.4, as well as other recent work on deception, including \cite{sarkadi2019modelling}.

\subsubsection*{Multiple Explainers and `Nested' Explanations}

In some cases, there may be multiple explainers trying to explain an explanandum $\beta$ to an explainee. For example, agents $i$ and $j$ may want to find an $\alpha$ that explains $\beta$ for agent $k$:
\begin{align*}
\vec e \models \bexpl( k, \alpha,\beta) \land B_j\expl( k, \alpha,\beta)
\end{align*}

Definition~\ref{def:multi_agent_expl} can be straightforwardly extended to capture this setting. Finally, agent $i$ may want to find an $\alpha$ that he believes that agent $j$ believes is an explanation for agent $k$:
\begin{align*}
\vec e \models B_i B_j\expl( k, \alpha,\beta)
\end{align*}

% \commentms{we can also explcitly talk about case where there is only one explaienr (i) who wants to find an explanation that agent $j$ believes is an explanation for agent $k$}

% \section{The role of epistemics in characterizing good explanations}
% \section*{Preferred Explanations: By and for whom?}
% \section*{Explanation Quality: Best for whom?}
\section{``Best'' Explanations for Whom?}\label{sec:best_expl}

An explanadum can typically be explained by a variety of different explanations, but it is often the case that an agent \textit{prefers} one explanation to another relative to some set of criteria. Indeed, there is a large body of previous work (e.g., \cite{levesque1989knowledge,lipton1990contrastive,boutilier1995abduction}) that outlines criteria for defining preference orderings over explanations. % and criteria for best explanations. 
In the context of a multiple agents, we have seen that what constitutes an explanation for one agent, may not constitute an explanation for another. This observation extends to the notion of preferred explanations---what's good in the eyes of the explainer may not be good for the explainee, or for all explainees. We explore the issue of preferred explanations briefly here in the context of Theory of Mind.

%

% not all explanations are equal/some explanations are preferred to others.

For each agent in the set of agents $A$, we define a binary preference relation $\prec$ over explanations such that $\prec_i$ is the preference relation for agent $i$.

\begin{definition}[Preferred Explanation]
Given epistemic states $\vec e$ and explanandum $\beta$, if $\alpha$ and $\alpha'$ both explain $\beta$ for agent $i$ and $\alpha \preceq_{i} \alpha'$, we say that $\alpha$ is at least as preferred as $\alpha'$ for agent $i$. $\alpha \prec_{i} \alpha'$ denotes that $\alpha$ is strictly preferred to $\alpha'$ for agent $i$.
\end{definition}

Similarly, we use $\alpha \preceq_{i,j} \alpha'$ to denote that agent $i$ believes that $\alpha$ is at least as preferred as $\alpha'$ for agent $j$. 

% \commentms{is this ok? I know we can't currently have beliefs of agents over the preference relations}

% \begin{definition}
% Given epistemic states $\vec e$, if $\alpha$ and $\alpha'$ both explain $\beta$ for agent $i$ and $\alpha \preceq_{i} \alpha'$, we say that $\alpha$ is at least as preferred as $\alpha'$ for agent $i$.
% \end{definition} \commentms{is this ok? I know we can't currently have beliefs of agents over the preference relations}

\begin{definition}[Optimal Explanation]\label{def:optimal_expl}
Given epistemic states $\vec e$ and explanandum $\beta$, $\alpha$ is an optimal explanation for $\beta$ wrt $\prec_i$ iff $\alpha$ explains $\beta$ for agent $i$ and there does not exist an explanation $\alpha'$ for $\beta$ for agent $i$ such that $\alpha'$ $\prec_i$ $\alpha$.
\end{definition}

 \citeauthor{hilton1990conversational} \cite{hilton1990conversational} posits that an explanation given by one agent to another is a form of conversation and should therefore adhere to \citeauthor{grice1975logic}'s \cite{grice1975logic} maxims which he proposed as part of a model for effective cooperative conversation. In what follows, we discuss a number of criteria for preferred explanations and relate them to Grice's maxims.

\subsubsection*{Truthfulness:}

\citeauthor{grice1975logic}'s first maxim is the \textbf{quality} maxim, according to which one must not provide information (e.g., to the explainee) that she believes to be false. 

% Per the quality maxim, an explainer should not provide an explanation which they believe to be false.

\begin{definition}[Subjectively Truthful Explanation]
Given epistemic states $\vec e$ and an explanandum $\beta$, $\alpha$ is a subjectively truthful explanation for agent $j$ from the perspective of agent $i$ iff $\vec e \models \bexpl(j,\alpha,\beta)\wedge B_i\alpha$.% \land e_i \models \alpha$. 

% Given $\alpha$ such that $\expl(i,j,\alpha,\beta)$, and if the explanation is delivered by $i$ to $j$, we say that the explanation is delivered truthfully or honestly iff agent $i$ believes $\alpha$. In other words the epistemic state of agent $i$ entails $\alpha$ and maybe also the epistemic state of agent $i$ entails that alpha explains $\beta$.
\end{definition}

\begin{example}
In our example, Mary may tell Bob that Tom poured water all over the floor, thereby explaining \textit{wetFloor}. However, since Mary does not believe that Tom did such a thing, it would not be a subjectively truthful explanation explanation from Mary's perspective.
\end{example}

% That is, if agent $i$ is providing an explanation $\alpha$ for agent $j$, the former should believe 
%
% \noindent\textbf{Liar Liar. }A user study conducted by \citeauthor{chakraborti2019can} \cite{chakraborti2019can} demonstrated that human participants generally approved of a lie told as part of an explanation, if they believed it was done to benefit some common objective. However, participants viewed lying negatively when they were being lied to or when they knew that the lying agent would get caught in the act.

\subsubsection*{Minimality:}
According to Grice's \textbf{quantity} and \textbf{relation} maxims, one must provide information that is relevant, sufficiently informative, and no more informative than needed. In a Theory of Mind context, the sufficiency of information is defined relative to the explainer's beliefs about the explainee's epistemic state and the explainer should therefore find the \textit{minimal} explanation relative to the explainee's epistemic state.
%
% Therefore, an explanation should be informative, convey sufficient information (and no more information than necessary), and be relevant.
%
% Corresponding to the quantity and relation maxims, an explanation should be informative, convey sufficient information (and no more information than necessary), and be relevant.
%
A large body of work concerned with explanation has discussed a minimality property which an explanation should satisfy. 
%
% While some works have defined minimality syntactically (e.g., \cite{levesque1989knowledge}), others have have defined this property semantically (e.g., \cite{aliseda2006abductive}).
%
For example, \citeauthor{levesque1989knowledge} \cite{levesque1989knowledge} defines a syntactic simplicity relation between explanations wherein an explanation is \textit{simpler} than another if it contains fewer propositional letters. Minimal explanations in the semantic sense may be defined relative to a set of possible explanations as those that are implied by all other explanations.

\subsubsection*{Plausibility:}
\citeauthor{grice1975logic}'s \textbf{quality} maxim also dictates that one should not provide information that is not supported by evidence. When applying this maxim to the beliefs of the explainee, an explainer may wish to consider how likely an explanation is from the point of view of the former.
For instance, in our example it is more likely that Bob will accept \textit{rain} as an explanation over the highly unlikely explanation according to which Alan Turing came to visit in the middle of the night and accidentally poured water all over the floor. Therefore, the likelihood of an explanation is an important preference criterion when explaining to ourselves and to others. In the quantitative case, \citeauthor{pearl2014probabilistic} \cite{pearl2014probabilistic} defines a \textit{most probable explanation} while in a qualitative setting the \textit{plausibility} of explanations may be defined where the most plausible explanations are those that require the `least' change in the explainee's epistemic state (e.g., \cite{quine1978web,boutilier1995abduction}), which could be defined in various ways, including the degree of held beliefs (e.g., \cite{van1992graded}).

\section{Explainer-Explainee Discrepancies}\label{sec:discrepancy}
% \section{Epistemics/ToM are necessary as they enable an explainer to `know' when explanation is needed. "\textbf{When} or why to explain what to whom"\commentms{we could discuss the different types of explanandums}}

% \citeauthor{DBLP:journals/ai/Miller19} [\citeyear{DBLP:journals/ai/Miller19}] notes that an intelligent agent that frequently presents explanations, unprompted, may become distracting and its usefulness may come into question.
% %

%\citeauthor{boutilier1995abduction} observed that facts that are inconsistent with an agent's expectations are those that should most urgently be explained.
%
To this point our account of explanation has assumed the existence of an explanandum, $\beta$, that is in need of explanation for a particular agent. However, in the absence of such a prompt, the explainer may use her Theory of Mind to put herself in the explainee's shoes, so to speak, and to identify \textit{discrepancies} between the beliefs of the explainee and those of the explainer, or perhaps in the case of multiple agents, to identify discrepancies between the beliefs of two agents that the %CORRECTED explainee
explainer can resolve via an explanation. Discrepancies can also arise from inconsistencies between an agent's beliefs and observations in the world. Such discrepancies are common prompts for explanation in the case of diagnosis (e.g., \cite{Reiter87,boutilier1995abduction}).

%
%While in some cases an explainer may be explicitly prompted to provide an explanation for a particular explanandum, $\beta$, in the absence of such a prompt she may use her Theory of Mind to put herself in the explainee's shoes, so to speak, and identify \textit{discrepancies} between the beliefs of the explainee and that of the explainer or some observation in the world Xxxxxxxxx their epistemic states.

% , as well as facts that are inconsistent with the explainee's expectations.
%

% An explanation is needed when a \textbf{discrepancy} is identified. Such a discrepancy allows for pinpointing and focusing the explanation. Identifying such discrepencies requires reasoning about the mental state of the explainee, using epistemics.

% An explaining agent must similarly be attuned to the explainee's mental state and identify \textit{explanation markers} that should prompt an explanation \textbf{more accurately, it should prompt the explainer to try to understand what requires explanation}. Such markers have been discussed by recent work and we here formalize the discussion and elucidate the role of epistemics in the identification and formalization of such markers.

% When projecting to reason as the explainee, the explainer can 

% Explainer (i.e., Agent 1) entails p, i.e., B1p but also entails
% not-p, i.e. B1(B2\neg p)

\begin{definition}[Discrepancy]
Given epistemic states $\vec e$, $\beta$ is a discrepancy between $e_i$ and $e_j$ iff $\vec e \models B_i\beta$ $\land$ $B_j \lnot\beta$.
\end{definition}

That is, agent $i$ believes $\beta$ while agent $j$ believes $\lnot\beta$. The beliefs of agents pertaining to discrepancies can also be represented in our framework.

\begin{definition}[Subjective Discrepancy]\label{def:subj_disc}
Given epistemic states $\vec e$, $\beta$ is a discrepancy between $e_i$ and $e_j$ from the perspective of agent $i$ iff $\vec e \models B_i(B_i\beta$ $\land$ $B_j \lnot\beta)$.
\end{definition}

% That is, agent $i$ believes $\beta$ while also believing that agent $j$ believes $\lnot\beta$.

% \commentms{Maybe add comment about introspection and equivalency of the two definitions}

% \commenttk{I would be inclined to define discrepancy objectively, e.g. with $ B_i\beta\wedge B_j\neg\beta$. Then we could talk about when agent $i$ believes there is a discrepancy, i.e. with $B_i(B_i\beta\wedge B_j\neg\beta)$. That would be equivalent to the formula from Def 10 under some assumptions about introspection. But perhaps Def 10 is better if we don't want to say that $i$ has introspection. }

% \commentsm{this meets with my intuition but I haven't thought about the repercussions wrt the overall document}

\begin{example}
In our example, while Mary believes \textit{wetFloor}, she believes that Bob believes that the floor is not wet (i.e., $\vec e \models B_{Mary} (B_{Mary} wetFloor \land B_{Bob}\lnot wetFloor$). Thus, $wetFloor$ is a discrepancy between Bob and Mary's respective epistemic states from Mary's perspective.
\end{example}

\begin{definition}[Subjective Discrepancy-resolving Explanation]\label{def:disc_res_expl}
Given epistemic states $\vec e$ and a discrepancy $\beta$ between $e_i$ and $e_j$ from the perspective of agent $i$, we say that $\alpha$ is a discrepancy-resolving explanation for agent $j$ for $\beta$ from agent $i$'s perspective if $\vec e \models  B_i[\alpha]_j \lnot B_j \lnot\beta$.
% \bexpl(j,\alpha,\beta)$.
\end{definition}

% \commentms{maybe it should be a foo-bar explanation instead of the $\expl$ operator}

\begin{example}
A discrepancy-resolving explanation for $wetFloor$ for Bob from Mary's perspective is \textit{rain}.
\end{example}

Note that Definition~\ref{def:disc_res_expl} appeals to the weaker inconsistency-resolving explanation defined in Definition~\ref{def:might_expl}. Thus, the explainer need not find an $\alpha$ that it believes will allow the explainee to entail the discrepancy. Rather, $\alpha$ should resolve the discrepancy by explaining its possibility.

%  Discrepancy from the perspective of the Explainee

% Explainee (i.e., Agent 2) entails q i.e., B2q but believes that B1 entails
% not-q, i.e., B2(B1\neg q)

% Therefore [q,\neg q] is a discrepancy

% \begin{theorem}
% Given epistemic states $\vec e$ and, if 
% \end{theorem}

We cast agent $i$ as the explainer and agent $j$ as the explainee, and distinguish between two types of subjective discrepancies: (1) where $\beta$ is a discrepancy between $e_i$ and $e_j$ from the explainer's perspective; and (2) where $\beta$ is a discrepancy between $e_i$ and $e_j$ from the explainee's perspective. In (1), as discussed, the explainer (e.g., Mary) may provide a discrepancy-resolving explanation for $\beta$ (e.g., \textit{rain}). However, for (2), in order to provide such as explanation the explainer must \textit{believe} that the explainee believes that there exists a discrepancy between $e_i$ and $e_j$. If the explainer's beliefs about the explainee's beliefs are incomplete or incorrect, the former may not recognize that such a discrepancy exists. %CAREFUL

% We discuss this further in Section~\ref{sec:adeq}.

\subsubsection*{Explainer as Mediator}
Definition~\ref{def:subj_disc} can be straightforwardly generalized to capture a setting where agent $i$ believes that there exists a discrepancy between $e_j$ and $e_k$:
\begin{align*}
\vec e \models B_i(B_j\beta \land B_k \lnot\beta)
\end{align*}

Agent $i$ may also believe that agent $j$ believes that $\alpha$ is an explanation for $\beta$ for agent $k$, while also believing that $\alpha$ is not in fact a valid explanation for agent $k$ due to the discrepancy between the epistemic states of agents $j$ and $k$:
\begin{align*}
\vec e \models B_i(B_j\expl(k,\alpha,\beta) \land \lnot\expl(k,\alpha,\beta))
\end{align*}

Using Definition~\ref{def:multi_agent_expl}, agent $i$ may explain the discrepancy to agents $j$ and $k$.
Note that the notion of discrepancy discussed here can easily be extended to encode other, possibly richer notions of discrepancy including the degree to which the epistemic states of two agents are discrepant.

\section{The (In)Adequacy of the Explainer's Beliefs}~\label{sec:adeq}
The explainer is limited by the accuracy of its beliefs about the explainee's beliefs and reasoning capabilities. Specifically, the explainer's beliefs about the explainee's beliefs and reasoning capabilities must be accurate `enough' -- \textit{adequate} -- for the explainer to generate `good' explanations wrt the explainee.

% An Explanation is Only as Good as the Explainer's Beliefs about the Explainee's Beliefs OR 
% \begin{definition}
% explainer's model is adequate iff explainer's model of explainee can entail same things (??) as the true explainee's model.
% \end{definition}

\begin{definition}[Adequacy]\label{def:adequacy}
Given epistemic states $\vec e$ and explanandum $\beta$\eat{the epistemic states of agents $i$ and $j$, respectively,}, we say that agent $i$'s epistemic state $e_i$ is adequate wrt agent $j$ iff for all $\alpha$, $\vec e$ $\models$ $\bexpl(j,\alpha,\beta)$ iff $\vec e$ $\models$ $\expl(j,\alpha,\beta)$.
\end{definition}
% \commenttk{A 4-place relation is still being used in some places...} oops

That is, if agent $i$'s epistemic state is adequate wrt agent $j$ and $\beta$, then it can generate all explanations (for $\beta$) for agent $j$ that are also explanations for agent $j$ in its actual epistemic state, $e_j$.

\begin{theorem}
Given epistemic states $\vec e$, explanandum $\beta$ and $\preceq_{i,j},\preceq_j$, agent $i$'s perspective of agent $j$'s preference relation and agent $j$'s actual preference relation, respectively, if $\preceq_{i,j}$ $=$ $\preceq_j$ and $e_i$ is adequate wrt agent $j$ and $\beta$, then for all $\alpha$, $\alpha$ is an optimal explanation for agent $j$ from agent $i$'s perspective wrt $\preceq_{i,j}$ iff $\alpha$ is an optimal explanation for agent $j$ wrt $\preceq_{j}$.

% from agent $i$'s perpsective 

% for all $\alpha$ such that $\alpha$ is a best/optimal explanation for $\beta$ for agent $j$, $\bexpl(j,\alpha,\beta)$ iff $\vec e$ $\models$ $\expl(j,\alpha,\beta)$.
\end{theorem}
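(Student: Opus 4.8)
The plan is to reduce the claimed biconditional to two pointwise identities: the identity of the preference relations, which is given directly as $\preceq_{i,j} = \preceq_j$, and the identity of the two sets of candidate explanations, which is exactly what adequacy provides. First I would make explicit the subjective reading of optimality that the theorem invokes but which Definition~\ref{def:optimal_expl} does not spell out: $\alpha$ is an optimal explanation for $\beta$ for agent $j$ \emph{from agent $i$'s perspective} wrt $\preceq_{i,j}$ iff $\vec e \models \bexpl(j,\alpha,\beta)$ and there is no $\alpha'$ with $\vec e \models \bexpl(j,\alpha',\beta)$ and $\alpha' \prec_{i,j} \alpha$. The objective optimality of $\alpha$ for $j$ wrt $\preceq_j$ is obtained by instantiating Definition~\ref{def:optimal_expl} at agent $j$: $\vec e \models \expl(j,\alpha,\beta)$ and there is no $\alpha'$ with $\vec e \models \expl(j,\alpha',\beta)$ and $\alpha' \prec_j \alpha$.

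Next I would transfer the first conjunct. By adequacy (Definition~\ref{def:adequacy}), for every formula $\gamma$ we have $\vec e \models \bexpl(j,\gamma,\beta)$ iff $\vec e \models \expl(j,\gamma,\beta)$; applying this at $\gamma = \alpha$ shows that the ``$\alpha$ is itself an explanation'' clauses of the two optimality statements match. The heart of the argument is the second conjunct, the non-existence of a strictly better competitor. Here I would apply adequacy again, but now under the universal quantifier over $\alpha'$: since $\{\alpha' : \vec e \models \bexpl(j,\alpha',\beta)\}$ and $\{\alpha' : \vec e \models \expl(j,\alpha',\beta)\}$ are the same set, and since $\preceq_{i,j} = \preceq_j$ yields $\prec_{i,j} = \prec_j$ (the strict part being derived from the weak relation in the same way on both sides), the statement ``no subjective explanation $\alpha'$ satisfies $\alpha' \prec_{i,j} \alpha$'' holds iff ``no objective explanation $\alpha'$ satisfies $\alpha' \prec_j \alpha$'' holds. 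Chaining the two equivalences gives the desired iff.

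The step I expect to require the most care is precisely this transfer of the non-existence clause: it is tempting to invoke adequacy only for the witness $\alpha$, but the clause quantifies over all competitors $\alpha'$, so the argument genuinely needs the \emph{set equality} of subjective and objective explanations rather than adequacy at a single point. Once this is recognized the proof is a direct substitution, and no structural properties of $\prec$ (transitivity, well-foundedness, and so on) are needed, which is worth remarking since optimality is defined purely through the absence of a strictly smaller element.
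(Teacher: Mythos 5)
Your proof is correct and follows the same route as the paper, which simply states that the result ``follows straightforwardly from Definitions~\ref{def:optimal_expl} and \ref{def:adequacy}'': you transfer the explanation-hood clauses via adequacy (which is already universally quantified over $\alpha$, so the set equality you carefully flag is immediate) and the comparison clauses via $\preceq_{i,j}=\preceq_j$. Your expansion is a faithful, properly detailed version of the paper's one-line argument, and your remark that no structural properties of $\prec$ are needed is accurate.
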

% \commenttk{\eat{I think that last line should be ``$\vec e\models B_i\expl(j,\alpha,\beta)$ iff $\vec e\models\expl(j,\alpha,\beta)$''.} Also, presumably somewhere you want to conclude that $\alpha$ would also be best for $j$ according to $i$'s perspective.}

That is, when $e_i$ is adequate wrt agent $j$ and when agent $i$'s beliefs about agent $j$'s preference relation are correct, the optimal explanation for agent $j$ from the perspective of agent $i$ is also the optimal obejctive explanation for agent $j$. The proof follows straightforwardly from Definitions \ref{def:optimal_expl} and \ref{def:adequacy}.

% Note that when agent $i$'s beliefs are merely adequate  wrt agent $j$'s beliefs (and $e_i \neq e_j$), an objective explanation for $i$ need not be an explanation for $j$. That is because, when $e_i \neq e_j$, an objective explanation for agent $i$ need not be an explanation for agent $j$, due to a discrepancy between $e_i$ and $e_j$. 

\subsection{Sources of (In)Adequacy}
Since most agents do not have a perfect image of another agent's mental state, an agent's beliefs about another agent may be inadequate for a myriad of reasons, including the inaccuracy of an agent's beliefs about the beliefs of other agents and about the way in which other agents revise their beliefs and perform entailment. In what follows, we focus on a setting where an agent holds inadequate beliefs about another agent's beliefs and illustrate using our running example.

% discuss some sources of inadequacy (and its consequences)

% \subsubsection{Inadequate Beliefs about the Explainee's Beliefs}

% \begin{example}
% Returning to our example, assume that Mary forgot that Bob found the hole with her and so she now \textit{falsely} believes that Bob does not believe that there is a hole in the roof (i.e., $\vec e \models B_{Mary} B_{Bob}\lnot holeInRoof$). Mary will therefore believe that $\textit{rain} \land \textit{holeInRoof}$ is the minimal explanation for Bob (relative to an intuitive measure of minimality). Notice, however, that in her explanation, Mary is conveying more information than is needed for Bob to entail $\textit{wetFloor}$ (thereby violating Grice's quantity maxim).
% \end{example}

\begin{example}
Returning to our example, assume that Mary forgot that Bob found the hole with her and so she now \textit{falsely} believes that Bob believes that there is no hole in the roof (i.e., $\vec e \models B_{Mary} B_{Bob}\lnot holeInRoof$). Mary will therefore believe that $\textit{rain} \land \textit{holeInRoof}$ is the minimal explanation for Bob (relative to an intuitive measure of minimality). Notice, however, that in her explanation, Mary is conveying more information than is needed for Bob to entail $\textit{wetFloor}$ (thereby violating Grice's quantity maxim).
\end{example}

\begin{example}
Now consider that Mary \textit{falsely} believes that Bob believes that it had rained and that there is no hole in the roof (perhaps she confused him with Tom!). Mary will therefore believe that $\textit{holeInRoof}$ is an explanation for Bob. However, $\vec e \not\models \expl(Bob,\textit{holeInRoof},wetFloor)$ since Bob does not believe \textit{rain}. This time, Mary has violated the quantity maxim by not providing \textit{enough} information for Bob to entail \textit{wetFloor}.
\end{example}

\begin{example}
Mary now falsely believes that Bob believes \textit{wetFloor} (i.e., $\vec e \models B_{Mary}B_{Bob}wetFloor$) and so does not provide him with an explanation, believing he does not require one. In this case, while \textit{wetFloor} is an objective discrepancy between Bob and Mary's epistemic states, it is not a discrepancy from Mary's perspective due to her false beliefs. 
\end{example}

\subsection*{Addressing Inadequacy}

% The explainer may strive to improve the adequacy of its .... 

It is possible to mitigate for the inadequacy of the explainer's beliefs in a variety of ways. For example, it may be beneficial for the explainer to attempt to refine its beliefs about the beliefs of the explainee when explanations are not understood by the explainee.\eat{ about the explainee's beliefs when perceiving a low degree of rationality in the explainee's behavior (when recognizing another agent's behavior) or }
To this end, the explainer could try to gather additional pertinent information by acting in the world (e.g., querying the explainee).
% Further, \citeauthor{pollack1986model}, among others, discusses techniques by which an explainer can infer another agent's beliefs given dialogue utterances formed by the latter \cite{pollack1986model}.
%
% The explainer may also refine its beliefs about the explainee's beliefs by employing machine learning techniques (e.g., \cite{liao2007learning,pereira2019online}). 
%
Additionally, \citeauthor{SreedharanHMK19} \cite{SreedharanHMK19} propose a learning technique which enables an explainer to learn a simple model of an explainee and decide, based on the learned model, what information would constitute a good explanation.
Further, \citeauthor{SreedharanCK18} \cite{SreedharanCK18} show how an explainer may generate explanations that are applicable to a set of possible explainee models which arise as the consequence of explainer uncertainty pertaining to the explainee's model.

Finally, while we emphasized the importance of the explainer modelling the beliefs of the explainee, our general account could in theory support the explainee, perhaps compensating for the explainer's inadequate beliefs, reasoning about the beliefs of the explainer to understand a given explanation that might otherwise be construed as inadequate. 
%As a concrete instantiation of this idea, 
For example, consider \citeauthor{chandrasekaran2017takes}'s \cite{chandrasekaran2017takes} discussion of a Theory of AI's Mind where a human attempting to better understand a black-box decision making system can do so by familiarizing themselves with the system's capabilities, peculiarities, and shortcomings.%CAREFUL 

\section{Related Work}\label{sec:related_work}

% There is a variety of research related to the ideas we have
% presented, and we cover only the most closely related here.
% Research into DE

% \commentms{previous and related work section} \commentsm{Important to group and cite lots of papers, even though we won't have space to elaborate on all of them} \commentms{perhaps we want some of this in the last section}\commentsm{yes, but good to do some pre-emptively to assure the reader you know the past work (to discuss when i see you)}We build on the wide shoulders of extant work spanning decades. - Explanation, argumentation/persuasion/abductive reasoning (diagnosis; PR;...)/belief revision/epistemic reasoning.

 %As discussed in Section ~\ref{sec:intro}, our work is related a large body of previous work. We only cover here the most closely related.
%As noted in Section~\ref{sec:intro}, there is a rich literature on the topic of explanation. Here, we only cover the most closely related work.
%
As previously discussed, we are not the first to propose an account of explanation in terms of the epistemic state of an agent.
%there exists a body of work that has proposed general accounts of explanation which appeal to the epistemic states of agents. 
%For instance, 
 \citeauthor{levesque1989knowledge} presents a knowledge-level account of abduction based on the epistemic state of an agent \cite{levesque1989knowledge}. He provides a generic definition of explanation that does not commit to a specific type of agent belief. Then, building on his seminal work on a logic of implicit and explicit belief \cite{levesque1984logic}, he shows how such different formal models of belief lead to different forms of abductive inference and resultant explanations. 
%
% generalizes/relaxes assumptions/the view taken by work on abduction by addressing a weaker form of explanation which does not require the entailment of the subject of explanation, but rather allows for its possibility. \commenttk{I don't think this was new to them.} \citeauthor{boutilier1995abduction} also
%
\citeauthor{boutilier1995abduction} \cite{boutilier1995abduction} similarly appeal to epistemic states to characterize the beliefs of an agent, employing belief revision to allow for explanations that are inconsistent with the epistemic state of the explainee.  Prior to the works of \citeauthor{levesque1989knowledge} and \citeauthor{boutilier1995abduction}, \citeauthor{gardenfors1988knowledge} \cite{gardenfors1988knowledge} proposed a model of explanation where explanations are defined relative to the epistemic states of agents. While \citeauthor{gardenfors1988knowledge}'s account is probabilistic, the models proposed by \citeauthor{levesque1989knowledge} and \citeauthor{boutilier1995abduction} are qualitative.
We share the use of epistemic states with all three works, the appeal to qualitative criteria with \citeauthor{levesque1989knowledge} and \citeauthor{boutilier1995abduction}, and the recognition of the importance of belief revision with \citeauthor{boutilier1995abduction}. Nevertheless, these works all characterize explanation with respect to a single agent providing no account of the distinct beliefs of the explainee \emph{and} explainer, nor do they capture their Theory of Mind. 

\citeauthor{nepomuceno2017abductive} \cite{nepomuceno2017abductive} propose an account of explanation that also recognizes the importance of a revision operator and the use of epistemic states. However, while their Dynamic Epistemic Logic (DEL) based framework can capture multiple agents, their focus remains on an agent's task of obtaining an abductive explanation for itself, rather than for other agents.

% , rather than that agent acting as an explainer for some explainee.

\citeauthor{halpern2005causes} \cite{halpern2005causes} proposed a structural model of explanation selection based on the epistemic state of the explainee. In their work, the explainee's epistemic state comprises a set of situations the explainee considers possible and an explanation is then meant to remove some of these possible situations such that the cause of some explanandum may be uniquely identified. 
%\textbf{the completion of the causal chain is more like an update right? can the explainee even have a false belief?}
%
Miller extends Halpern and Pearl's approach to include \textit{contrastive} explanations which are given relative to some counterfactual (e.g, in response to the question `\textit{Why P rather than Q?}') \cite{miller2018contrastive}. \citeauthor{halpern2005causes}, however, do not dicuss some of the necessary elements of Theory of Mind in explanation, such as the notions of explainer-explainee discrepancies and the adequacy of the explainer's beliefs.

% formalize the notion of the adequacy of the explainer's beliefs and do not discuss explanations comprising epistemic components such as the nested beliefs of agents, as we do in our work.

% \textcolor{red}{Gardenfors [22] has proposed a model of abduction that relies crucially on the epistemic state of the agent doing the explaining. Our model finds its origins in his account, but there are several crucial differences. First, Gardenfors’s model is probabilistic whereas our model is qualitative. As well, our model will provide a predictive notion of explanation (in a sense described below). In contrast, Gardenfors makes no such requirement, counting as explanations facts that only marginally affect the probability of an observation. However, we share with Gardenfors the idea that explanations may be evaluated with respect to states of belief other than that currently held by an agent.} \textbf{from Craig's paper - paraphrase and relate to Halpern and to Boutiller. also discussed at length in Halpern's Defining Explanation in Probabilistic Systems}

% In the context of XAIP, \citeauthor{chakraborti2017plan} \cite{chakraborti2017plan} (and related work on \textit{model reconciliation}) enable an explainer to explain its plan to an explainee while taking the explainee's perspective into account. In contrast to most work on model reconciliation which assumes that the explainee's beliefs are given as a classical planning model, 

In the context of XAIP, \citeauthor{SreedharanHMK19} \cite{SreedharanHMK19} demonstrate how the model reconciliation paradigm, proposed by \citeauthor{chakraborti2017plan} \cite{chakraborti2017plan}, can be generalized to address the important case where the explainee's model of the explainer's planning model is not explicitly known or not provided in a declarative form. Our work captures some of the insights in \citeauthor{SreedharanHMK19}'s work, in addition to incorporating the notions of epistemic states and belief revision, which in turn allows us to draw inspiration from the rich body of previous work in the field where these ideas originated.

We have focused discussion on the subset of work that is most closely related to the contributions of the paper. For a comprehensive survey of research on explanation, the reader is directed to %\citeauthor{DBLP:journals/ai/Miller19}'s
\cite{DBLP:journals/ai/Miller19}.
%compilation of works.
%and other recent survey publications.

% The authors propose computational approaches that perform `model reconciliation' which aims to align the planning model of the explainer and of the explainee such that, for example, the explainer's plan is optimal in the explainee's revised model..

% the notion of a discrepancy between the beliefs of the explainer and 
% explainee.

% work, however, does not appeal to logics of belief or 

%  do not formalize their discussion 

% The authors also demonstrate how an explainer may balance between the generation of plans that are explicable in the explainee's model and the generation of explicit communicative explanations \cite{chakraborti2019balancing}.

\section{Concluding Remarks}\label{sec:concluding}

The use of Theory of Mind in explanation holds the promise of producing high-quality explanations that are tailored to the beliefs of the explainee, in the context of the beliefs (and ignorance) of the explainer.
In this paper, we identified a set of desiderata for explanation that utilizes Theory of Mind. These desiderata informed our proposed belief-based account of explanation. Key features of this account are the appeal to epistemic states to capture the mental states of \emph{both} the explainer and explainee, and the use of the explainee's belief revision to assimilate explanations.
%\textcolor{red}{ In so doing we rid ourselves of the commitment to representing beliefs as logical sentences, as well as the assumption that they are consistent and will be consistent with any explanation (an assumption that obviates the need for belief revision).} %We appeal to logics of belief and model the beliefs of multiple agents which may be cast as explainer(s) and explainee(s). 
Further, we formalized and discussed the notion of a discrepancy as a property that allows the explainer to anticipate and provide explanations without prompting. We also presented properties relating to the adequacy of the explainer's beliefs with respect to providing an explanation. 
%between the epistemic states of the explainer and explainee as a mechaas well as the conditions under which an explainer's beliefs about an explainee are adequate.

% We have shown how our framework satisfies a number of important desiderata which we posit must be met by accounts of explanation that purport to employ Theory of Mind.

% important and varied roles of Theory of Mind reasoning in explanation while relating explanations to the epistemic states of agents as well as to the mechanism by which agents revise their beliefs. We appeal to logics of belief and model the beliefs of multiple agents which may be cast as explainer(s) and explainee(s). We have shown how our framework captures various interesting settings such as privacy and multiple agents explaining to multiple agents. Further, we have formalized the notion of a discrepancy between the epistemic states of the explainer and explainee that prompts an explanation. Finally, we have identified the conditions under which an explainer's beliefs about an explainee are adequate such that the explainer can robustly generate valid explanations for the explainee.

This paper has provided a general characterization of explanation without focusing on its computational realization. This was done by design to allow for a diversity of explanation scenarios and agent types, including human, black-box decision maker, or knowledge-based system. Nevertheless in the simplest case if the beliefs of the explainer are represented as formulae (logical or probabilistic) then, as observed by \citeauthor{levesque1989knowledge} \cite{levesque1989knowledge} and \citeauthor{boutilier1995abduction} \cite{boutilier1995abduction}, our notion of explanation may be realized via an augmentation of existing abductive reasoning systems such as Theorist \citeauthor{poole1989explanation} \cite{poole1989explanation}, for example.

% By representing agent beliefs with epistemic states, our proposed account of explanation supports a wide variety of agent types, including human, black-box decision maker, or knowledge-based system. In turn, this representation allows us to avail ourselves of various techniques for explanation generation, used by previous work that related explanation to the epistemic states of agents. For instance, \citeauthor{levesque1989knowledge} \cite{levesque1989knowledge} shows how, when representing epistemic states in a particular syntactic form, abductive reasoning systems can be used to generate explanations.  %~\cite{de1987foundations}. 
% Similarly, \citeauthor{boutilier1995abduction} \cite{boutilier1995abduction} demonstrates how, under certain syntactical assumptions, the Theorist abductive reasoning system \cite{poole1989explanation} can be used to generate explanations in their epistemic state-based framework.

% \textcolor{red}{***** START *****}
% \commentsm{Hard to edit/comment out in color, so removed the color.  Please see latex (shortly)}

Further, in much of this paper we have been relating our Theory of Mind characterization of explanation in the context of English-like statements (e.g., Mary \emph{telling} Bob that it had rained last night). However, if we turn to the broad endeavour of XAI that helped motivate our account, we note that an explanation 
can take on many different forms other than human-interpretable language (e.g., a set of weights in a neural network, select pixels, a gesture, a heightening of intensity in a region of an image).
At its core, an explanation is something that is conveyed by the explainer to the explainee (e.g., by telling, demonstrating, visualizing, etc.) in order to justify the latter's belief in
some explanandum. For example, by constructing a heat-map from a medical image, an otherwise black-box decision-making algorithm can highlight for the explainee the pixels that have most strongly supported its classification decision \cite{montavon2018methods}, thereby allowing the explainee to assimiliate this explanation into their beliefs and better interpret the system's decision. As has been argued in this paper, the decision-making system, acting as an explainer, should possess the ability to take the epistemic state of the explainee into account, tease apart the salient features required for the explainee to justify its belief in the explanandum, and present those to the explainee as an explanation. %
Some of these insights pertaining to explanations for black-box solvers are similarly echoed by \citeauthor{SreedharanHMK19} in the context of their model reconciliation paradigm \cite{SreedharanHMK19} (Section 2).
Our general account is intended to provide building blocks towards this broader XAI objective.

There are several take-aways from this paper that are worth highlighting. Explanations need not be consistent with an agent's beliefs. As such, contrary to most logical treatments of explanation, characterizations of explanation should involve a belief revision component, and not just the expansion of existing beliefs to include an explanation. Further, by providing a belief-based account of explanation that characterizes mental states in terms of epistemic states, and by allowing for epistemic states and revision operators to be realized in a diversity of forms from standard logical accounts, to computer programs, neural networks or human brains, we can capture the mental states of a myriad of different types of agents. Finally, by characterizing explanations in terms of the explainer's beliefs about the explainee's beliefs and revision operator, we can capture the role of Theory of Mind in explanation for a myriad of different types of agents.

\section*{Acknoweldgements}
The authors gratefully acknowledge funding from the Natural Sciences and Engineering Research Council of Canada (NSERC), the Canadian Institute for Advanced Research (CIFAR), and Microsoft Research.

%Watching...
%Here's what Marzyeh writes: MG is funded in part by Microsoft Research, a CIFAR AI
%Chair at the Vector Institute
%
%ROger doesn't include vector in the ackonowledgements but perhaps %I haven't found the final versions of his papers. Looking now

%Thanks .... stepping away for 2 minutes.  Back shortly.
%This is what Roger writes: 
%RG acknowledges support from the CIFAR Canadian AI Chairs program

%ACk

% Exciting avenues for future work abound. For one, 

% In future work, we hope to expand the work by...
% \commentms{maybe talk about computation}

% , and how an explainer may identify such discrepancies by `simulating' the epistemic state of an explainee.

% \commenttk{The Gunning citation, \cite{gunning2017explainable}, doesn't seem to be properly formed and I'm not sure which paper it's referring to. Also check \cite{pople1973mechanization,gardenfors1988knowledge,quine1978web} and other non-journal references that have volume numbers but no specified series. Also, I'm not sure that ``citeseer'' is meant to be part of any of the references.}
% \commenttk{I think \cite{barlassina2004folk} is the SEP article but the citation isn't complete. I think \cite{hayes1983building} is a book whose editors were what we have listed as the authors.}\commentms{I updated the citation. Should the author field remain empty?}

%
% ---- Bibliography ----
%
% BibTeX users should specify bibliography style 'splncs04'.
% References will then be sorted and formatted in the correct style.
%
% \bibliographystyle{splncs04}
% \documentclass{llncs}

\bibliographystyle{splncs}

\bibliography{bib}  % put name of your .bib file here
% \bibliography{mybibliography}
%
% \begin{thebibliography}{8}
% \bibitem{ref_article1}
% Author, F.: Article title. Journal \textbf{2}(5), 99--110 (2016)

% \bibitem{ref_lncs1}
% Author, F., Author, S.: Title of a proceedings paper. In: Editor,
% F., Editor, S. (eds.) CONFERENCE 2016, LNCS, vol. 9999, pp. 1--13.
% Springer, Heidelberg (2016). \doi{10.10007/1234567890}

% \bibitem{ref_book1}
% Author, F., Author, S., Author, T.: Book title. 2nd edn. Publisher,
% Location (1999)

% \bibitem{ref_proc1}
% Author, A.-B.: Contribution title. In: 9th International Proceedings
% on Proceedings, pp. 1--2. Publisher, Location (2010)

% \bibitem{ref_url1}
% LNCS Homepage, \url{http://www.springer.com/lncs}. Last accessed 4
% Oct 2017
% \end{thebibliography}
\end{document}